\newenvironment{itemizeReduced}{
\begin{list}{\labelitemi}{\leftmargin=1.8em}
\setlength{\itemsep}{3pt}
\setlength{\parskip}{0pt}
\setlength{\parsep}{0pt}}{\end{list}
}
\newcommand{\BEAS}{\begin{eqnarray*}}
\newcommand{\EEAS}{\end{eqnarray*}}
\newcommand{\BEA}{\begin{eqnarray}}
\newcommand{\EEA}{\end{eqnarray}}
\newcommand{\BEQ}{\begin{equation}}
\newcommand{\EEQ}{\end{equation}}
\newcommand{\BIT}{\begin{itemizeReduced}}
\newcommand{\EIT}{\end{itemizeReduced}}
\newcommand{\BNUM}{\begin{enumerate}}
\newcommand{\ENUM}{\end{enumerate}}
\newcommand{\BA}{\begin{array}}
\newcommand{\EA}{\end{array}}
\newcommand{\rb}{\mathbb{R}}
\newcommand{\BlackBox}{\rule{1.5ex}{1.5ex}}  
\newcommand{\mysec}[1]{Section~\ref{sec:#1}}
\newcommand{\eq}[1]{Eq.~(\ref{eq:#1})}
\newenvironment{proof}{\par\noindent{\bf Proof\ }}{\hfill\BlackBox\\[2mm]}
\newtheorem{lemma}{Lemma}
\newtheorem{proposition}{Proposition}
\title{Duality between subgradient and conditional gradient methods}
\author{
Francis Bach  \\
INRIA - Sierra project-team\\
D\'epartement d'Informatique de l'Ecole Normale Sup\'erieure \\
Paris, France \\
\texttt{francis.bach@ens.fr} }
\begin{document}

\maketitle

 \begin{abstract}
 Given a convex optimization problem and its dual, there are many possible first-order algorithms. In this paper, we show the equivalence between mirror descent algorithms and algorithms generalizing the conditional gradient method. This is done through convex duality, and implies notably that for certain problems, such as for supervised machine learning problems with non-smooth losses or problems regularized by non-smooth regularizers, the primal subgradient method and the dual conditional gradient method are formally equivalent. The dual interpretation leads to a form of line search for mirror descent, as well as guarantees of convergence for primal-dual certificates. \end{abstract}

 \section{Introduction}
 Many problems in machine learning, statistics and signal processing may be cast as convex optimization problems. In large-scale situations, simple gradient-based algorithms with potentially many cheap iterations are often preferred over methods, such as Newton's method or interior-point methods, that rely on fewer but more expensive iterations. The choice of a first-order method depends on the structure of the problem, in particular (a) the smoothness and/or strong convexity of the objective function, and (b) the computational efficiency of certain operations related to the non-smooth parts of the objective function, when it is decomposable in a smooth and a non-smooth part.
 
 In this paper, we consider two classical algorithms, namely (a) subgradient descent and its mirror descent extension \cite{shor1985minimization,nemirovsky1983problem,beck2003mirror}, and (b) conditional gradient algorithms, sometimes referred to as Frank-Wolfe algorithms~\cite{frank2006algorithm,dem1967minimization,dunn1978conditional,dunn1980convergence,jaggi}. 
 
 Subgradient algorithms are adapted to non-smooth unstructured situations, and after $t$ steps have a convergence rate of $O(1/\sqrt{t})$ in terms of objective values. This convergence rate improves to $O(1/t)$ when the objective function is strongly convex~\cite{nedic2000convergence}.
Conditional-gradient algorithms are tailored to the optimization of smooth functions on a compact convex set, for which minimizing linear functions is easy (but where orthogonal projections would be hard, so that proximal methods \cite{Nesterov2007,Beck2009}  cannot be used efficiently).
They also have a convergence rate of $O(1/t)$~\cite{dunn1978conditional}. The main results of this paper are (a) to show that for common situations in practice, these two sets of methods are in fact equivalent by convex duality,   (b) to recover a previously proposed  extension of the conditional gradient method which is more generally applicable~\cite{bredies2008iterated}, and (c) provide explicit convergence rates for primal and dual iterates. We also review in Appendix~\ref{app:nonstrongly} the non-strongly convex case and show that both primal and dual suboptimalities then converge at rate $O(1/\sqrt{t})$.

 More precisely, we consider a convex function $f$ defined on $\rb^n$, a  convex function $h$ defined on $\rb^p$, both potentially taking the value $+\infty$,  and a matrix  $ A \in \rb^{n \times p}$.  We consider the following minimization problem, which we refer to as the \emph{primal} problem:
 \BEQ
 \label{eq:primal}
 \min_{x \in \rb^p}   \  h(x) + f(Ax ).
 \EEQ
  Throughout this paper, we make the following assumptions regarding the problem:
\BIT
  \item[--] \emph{$f$ is Lipschitz-continuous and finite on $\rb^n$}, i.e., there exists a constant $B$ such that
   for all $x,y \in \rb^n$, $|f(x) - f(y)| \leqslant B \| x- y \|$, where $\| \cdot \|$ denotes the Euclidean norm. Note that this implies that the domain of  the Fenchel conjugate $f^\ast$  is bounded. We denote by $C$ the bounded domain of $f^\ast$. Thus, for all $z \in \rb^n$, $f(z) = \max_{ y \in C} y^\top z - f^\ast(y)$. In many situations, $C$ is also closed but this is not always the case (in particular, when $f^\ast(y)$ tends to infinity when $y$ tends to the boundary of $C$).
   
   Note that the boundedness of the domain of $f^\ast$ is crucial and allows for simpler proof techniques with explicit constants (see a generalization in~\cite{bredies2008iterated}).

   \item[--] \emph{$h$ is lower-semicontinuous and $\mu$-strongly convex on $\rb^p$}. This implies that $h^\ast$ is defined on $\rb^p$, differentiable with $(1/\mu)$-Lipschitz continuous gradient~\cite{borwein2006caa,rockafellar97}. Note that the domain $K$ of $h$ may be strictly included in $\rb^p$.

   \EIT

    \vspace*{.25cm}
    
    Moreover, we assume that the following quantities may be computed efficiently:
   \BIT
    \item[--] \emph{Subgradient of $f$}: for any $z \in \rb^n$, a subgradient of $f$ is any maximizer $y$ of
  $ \max_{y \in C} y^\top z - f^\ast(y)$.
    
    \item[--] \emph{Gradient of $h^\ast$}: for any $z \in \rb^p$, $(h^\ast)'(z)$ may be computed and is equal to the unique maximizer $x$ of  $ \max_{x \in \rb^p} x^\top z - h(x)$.
     \EIT

 \vspace*{.25cm}
 
  The values of the functions $f$, $h$, $f^\ast$ and $h^\ast$ will be useful       to compute duality gaps but are not needed to run the algorithms. As shown in \mysec{examples}, there are many examples of pairs of functions with the computational constraints described above. If other operations are possible, in particular $\max_{y \in C} y^\top z - f^\ast(y) - \frac{\varepsilon}{2}\|y\|^2$, then proximal methods~\cite{Beck2009,Nesterov2007} applied to the dual problem converge at rate $O(1/t^2)$. If $f$ and $h$ are smooth, then gradient methods (accelerated~\cite[Section 2.2]{Nesterov2004} or not) have linear convergence rates.

We denote by $g_{\rm primal}(x) =  h(x) + f(Ax )$ the primal objective in \eq{primal}.
It is the sum of a Lipschitz-continuous convex function and a strongly convex function, potentially on a restricted domain $K$. It is thus well adapted to the subgradient method~\cite{shor1985minimization}.

We have the following primal/dual relationships (obtained from Fenchel duality~\cite{borwein2006caa}):
\BEAS
 \min_{x \in \rb^p}   h(x) + f(Ax )
 & = &   \min_{x \in \rb^p}  \max_{ y \in C}    h(x)  + y^\top ( A x  ) - f^\ast(y)
\\
 & = &   \max_{ y \in C}   \bigg\{ \min_{x \in \rb^p}     h(x) + x^\top A^\top y 
 \bigg\} - f^\ast(y)  \\
 & = &   \max_{ y \in C}  -    h^\ast(-A^\top y  )  - f^\ast(y).
\EEAS

This leads to the \emph{dual} maximization problem:
\BEQ
\label{eq:dual}
\max_{ y \in C}     -    h^\ast(-A^\top y )  - f^\ast(y).
\EEQ
 We denote by $ g_{\rm dual}(y) =  -    h^\ast(-A^\top y )  - f^\ast(y)
$ the dual objective. It has a smooth part   $-    h^\ast(-A^\top y ) $ defined on $\rb^n$ and a potentially non-smooth part $ - f^\ast(y)$, and the problem is restricted onto a \emph{bounded} set $C$. When $f^\ast$ is linear (and more generally smooth) on its support, then we are exactly in the situation where conditional gradient algorithms may be used~\cite{frank2006algorithm,dem1967minimization}.

Given a pair of primal-dual candidates $(x,y)  \in K \times C$, we denote by ${\rm gap}(x,y)$ the duality gap:
$$
{\rm gap}(x,y) = g_{\rm primal}(x)  -  g_{\rm dual}(y)
=  \big[ h(x) + h^\ast(-A^\top y ) +    y^\top Ax \big ]   + \big[  f(Ax) + f^\ast(y) - y^\top Ax 
\big] .$$
It is equal to zero if and only if (a) $(x,-A^\top y)$ is a Fenchel-dual pair for $h$ and (b) $(Ax,y)$ is a Fenchel-dual pair for $f$.
This quantity serves as a certificate of optimality, as 
$$ 
{\rm gap}(x,y) = \big[ g_{\rm primal}(x) - \min_{x' \in K} g_{\rm primal}(x') \big]  
+\big[ \max_{y' \in C} g_{\rm dual}(y') -  g_{\rm dual}(y) \big].
$$
  The goal of this paper is to show that for certain problems ($f^\ast$ linear and $h$ quadratic), the subgradient method applied to the primal problem in \eq{primal} is equivalent to the conditional gradient applied to the dual problem in \eq{dual}; when relaxing the assumptions above, this equivalence is then between mirror descent methods and generalized conditional gradient algorithms.

 \section{Examples}
 \label{sec:examples}
 The non-smooth strongly convex optimization problem defined in \eq{primal} occurs in many applications in machine  learning and signal processing, either because they are formulated directly in this format, or their dual in \eq{dual} is (i.e., the original problem is the minimization of a smooth function over a compact set).
 
 \subsection{Direct formulations}
 
 Typical cases for $h$ (often the regularizer in machine learning and signal processing) are   the following:
 \BIT
 \item[--]\emph{Squared Euclidean norm}: $h(x) = \frac{\mu}{2} \| x\|^2$, which is $\mu$-strongly convex.
 
 \item[--]\emph{Squared Euclidean norm with convex constraints}: $h(x) = \frac{\mu}{2} \| x\|^2 + I_K(x)$, with $I_K$ the indicator function for $K$ a closed  convex set, which is $\mu$-strongly convex.
 
 \item[--]\emph{Negative entropy}: $h(x) = \sum_{i=1}^n x_i \log x_i + I_K(x)$, where $K = \{ x \in \rb^n,\ x \geqslant 0, \ \sum_{i=1}^n x_i=1 \}$, which is $1$-strongly convex. More generally, many barrier functions of convex sets may be used (see examples in \cite{beck2003mirror,boyd.convex}, in particular for problems on matrices). 
  \EIT
 
 \vspace*{.25cm}

 Typical cases for $f$ (often the data fitting terms in machine learning and signal processing) are functions of the form $f(z) = \frac{1}{n}\sum_{i=1}^n \ell_i(z_i)$:
 \BIT
 \item[--]\emph{Least-absolute-deviation}: $\ell_i(z_i) = | z_i - y_i| $, with $y_i \in \rb$. Note that the square loss is not Lipschitz-continuous on $\rb$ (although it is Lipschitz-continuous when restricted to a bounded set).

  \item[--]\emph{Logistic regression}: $\ell_i(z_i) = \log( 1+ \exp( - z_i y_i) )$, with $y_i \in \{-1,1\}$. Here $f^\ast$ is not linear in its support, and $f^\ast$ is not smooth, since it is a sum of negative entropies (and the second-order derivative is not bounded). This extends to any ``log-sum-exp'' functions which occur as a negative  log-likelihood from the exponential family~(see, e.g., \cite{wainwright2008graphical} and references therein). 
  Note that $f$ is then smooth and   proximal methods with an exponential convergence rate may be used (which correspond to a constant step size in the algorithms presented below, instead of a decaying step size)~\cite{Nesterov2007,Beck2009}.
  
   \item[--]\emph{Support vector machine}: $\ell_i(z_i) = \max\{1-y_i z_i,0\}$, with $y_i \in \{-1,1\}$. Here  $f^\ast$ is linear on its domain (this is  a situation where subgradient and conditional gradient methods are exactly equivalent). This extends to more general ``max-margin'' formulations~\cite{tsochantaridis2006large,taskar2005learning}: in these situations, a combinatorial object (such as a full chain, a graph, a matching or vertices of the hypercube) is estimated (rather than an element of $\{-1,1\}$)  and this leads to functions $z_i \mapsto \ell_i(z_i)$ whose Fenchel-conjugates are linear and have domains which are related to the polytopes associated to the linear programming relaxations of the corresponding combinatorial optimization problems. For these polytopes, often, only linear functions can be maximized, i.e., we can compute a subgradient of $\ell_i$ but typically nothing more.
   
 \EIT

 \vspace*{.25cm}
 
 Other examples may be found in signal processing; for example, total-variation denoising, where the loss is strongly convex but the regularizer is non-smooth~\cite{chambolle2011first}, or submodular function minimization cast through separable optimization problems~\cite{bach2011learning}. Moreover, many proximal operators for non-smooth regularizers are of this form, with $h(x) = \frac{1}{2} \| x - x_0\|^2$ and $f$ is a norm (or more generally a gauge function).

   \subsection{Dual formulations}
Another interesting set of  examples for machine learning are more naturally described from the dual formulation in \eq{dual}: given a smooth loss term $  h^\ast(-A^\top y)$ (this could be least-squares or logistic regression), a typically non-smooth penalization or constraint is added, often through a norm $\Omega$. Thus, this corresponds to functions $f^\ast$ of the form $f^\ast(y) = \varphi( \Omega(y) )$, where $\varphi$ is a convex non-decreasing function ($f^\ast$ is then convex). 

Our main assumption is that a subgradient of $f$ may be easily computed. This is equivalent to being able to maximize functions of the form $z^\top y - f^\ast(y) = z^\top y  - \varphi(\Omega(y))$ for $z \in \rb^n$. If one can compute the dual norm of $z$, $\Omega^\ast(z) = \max_{ \Omega(y) \leqslant 1} z^\top y$, and in particular a maximizer $y$ in the unit-ball of $\Omega$, then one can compute simply the subgradient of $f$. Only being able to compute the dual norm efficiently is a common situation in machine learning and signal processing, for example, for structured regularizers based on submodularity~\cite{bach2011learning}, all atomic norms~\cite{chandrasekaran2012convex}, and norms based on matrix decompositions~\cite{siammatrix}. See additional examples in~\cite{jaggi}.

Our assumption regarding the compact domain of $f^\ast$ translates to the assumption that $\varphi$ has compact domain. This includes indicator functions $\varphi = I_{[0, \omega_0]}$ which corresponds to the constraint $\Omega(y) \leqslant \omega_0$. We may also consider $\varphi(\omega) = \lambda \omega +  I_{[0, \omega_0]}(\omega)$, which corresponds to jointly penalizing and constraining the norm; in practice, $\omega_0$ may be chosen so that the constraint $\Omega(y) \leqslant \omega_0$ is not active at the optimum and we get the solution of the penalized problem $\max_{y \in \rb^n} -h^\ast(-A^\top y) - \lambda \Omega(y)$. See~\cite{zaidcg,zhang2012accelerated,siammatrix} for alternative approaches.

\section{Mirror descent for strongly convex problems}
\label{sec:mirror}
We first assume that the function $h$ is \emph{essentially smooth} (i.e., differentiable at any point in the interior of $K$, and so that the norm of gradients converges to $+\infty$ when approaching the boundary of $K$); then  $h'$ is a bijection from ${\rm int}(K)$ to $\rb^p$, where $K$ is the domain of $h$ (see, e.g.,~\cite{rockafellar97,hiriart1996convex}). We consider the  Bregman divergence
$$D(x_1,x_2) = h(x_1) - h(x_2) - ( x_1 - x_2)^\top h'(x_2). $$
 It is always defined on $K \times {\rm int}(K)$, and is nonnegative. 
If $x_1,x_2 \in {\rm int}(K)$, then $D(x_1,x_2)=0$ if  and only if $x_1=x_2$. Moreover, since $h$ is assumed $\mu$-strongly convex, we have $D(x_1,x_2) \geqslant \frac{\mu}{2} \| x_1 - x_2\|^2$. See more details in~\cite{beck2003mirror}.
 For example, when $h(x) = \frac{\mu}{2} \| x\|^2$, we have $D(x_1,x_2) = \frac{\mu}{2} \| x_1 - x_2\|^2$.

\paragraph{Subgradient descent for square Bregman divergence}
We first consider the common situation where $h(x) = \frac{\mu}{2} \| x\|^2 $; the primal problem then becomes:
$$
\min_{x \in K}   f(Ax ) + \frac{\mu}{2} \| x\|^2.
$$

The projected subgradient method starts from any $x_0 \in \rb^p$, and iterates the following recursion:
$$
x_t = x_{t-1} - \frac{\rho_t}{\mu} \big[
A^\top f'(A x_{t-1}) + \mu x_{t-1}
\big]  ,
$$
where $\bar{y}_{t-1} = f'(A x_{t-1})$ is any subgradient of $f$ at $A x_{t-1}$. The step size is $ \displaystyle \frac{\rho_t}{\mu} $.

The recursion may be rewritten as
$$
\mu x_t =  \mu x_{t-1} -  {\rho_t} \big[
A^\top f'(A x_{t-1}) + \mu x_{t-1}
\big] ,
$$
which is equivalent to $x_t$ being the unique minimizer of
\BEQ
\label{eq:AA}
 (  x - x_{t-1})^\top  \big[ A^\top \bar{y}_{t-1} +   \mu x_{t-1} \big] +  \frac{\mu}{2 \rho_t} \| x - x_{t-1}\|^2  ,
\EEQ
which is the traditional proximal step, with step size $\rho_t /\mu$.


\paragraph{Mirror descent}
We may interpret the last formulation in \eq{AA} for the square regularizer $h(x) = \frac{\mu}{2} \| x\|^2$ as the minimization
of
    $$  (  x - x_{t-1})^\top  g'_{\rm primal}(x_{t-1}) +  \frac{1}{ \rho_t} D(x,x_{t-1}),
$$
with solution defined through (note that $h'$ is a bijection from ${\rm int}(K)$ to $\rb^p$):
\BEAS
h'(x_t)  & = &  h'(x_{t-1}) -    \rho_t \big[
A^\top f'(A x_{t-1}) + h'(x_{t-1})  \big]  \\
& = & ( 1 - \rho_t)  h'(x_{t-1})  -    \rho_t  
A^\top f'(A x_{t-1}).
\EEAS
 This leads to the following definition of the mirror descent recursion:
\BEQ
\label{eq:mirror}
\left\{
\begin{array}{ccl} 
  \bar{y}_{t-1} &  \in &\displaystyle \arg \max_{ y \in C } \ y^\top A x_{t-1} - f^\ast(y) , \\[.25cm]
    x_t & = & \displaystyle \arg \min_{x \in \rb^p } \ h(x) -  ( 1 -   \rho_t ) x^\top h'(x_{t-1}) + \rho_t  x^\top A^\top \bar{y}_{t-1} .
    \end{array} \right.
\EEQ
 
 The following proposition proves the convergence of mirror descent in the strongly convex case with rate $O(1/t)$---previous results were considering the convex case, with convergence rate $O(1/\sqrt{t})$~\cite{nemirovsky1983problem,beck2003mirror}.
 \begin{proposition}[Convergence of mirror descent in the strongly convex case]
  \label{prop:mirror}
  Assume that (a) $f$ is Lipschitz-continuous and finite on $\rb^p$, with $C$ the domain of~$f^\ast$, (b) $h$ is essentially smooth and $\mu$-strongly convex. Consider $\rho_t = 2/(t+1)$ and $R^2 = \max_{y,y' \in C} \| A^\top ( y - y')\|^2$. Denoting by $x_\ast$ the unique minimizer of $g_{\rm primal}$,  after~$t$ iterations of the mirror descent recursion of \eq{mirror}, we have:
 \BEAS
 g \bigg( \frac{2}{t(t+1)}\sum_{u=1}^t u  x_{u-1}  \bigg)
- g_{\rm primal} (x_\ast)  
  &  \leqslant  & 
 \frac{R^2   }{  \mu (t+1)  } , \\
 \min_{ u \in \{0,\dots,t-1\}}  \Big\{ g_{\rm primal}(x_u) -  g_{\rm primal} (x_\ast) \Big\}  & \leqslant  &\frac{R^2   }{  \mu (t+1)  }, \\
  D ( x_\ast, x_{t}) & \leqslant  &  \frac{R^2   }{  \mu (t+1)  }.
 \EEAS
 \end{proposition}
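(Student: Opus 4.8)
The plan is to run the classical mirror-descent three-point analysis, but to upgrade the usual $O(1/\sqrt t)$ guarantee to $O(1/t)$ by exploiting the strong convexity of $h$ — both through the lower bound $D(x_1,x_2)\geqslant\frac\mu2\|x_1-x_2\|^2$ and, dually, through the $(1/\mu)$-Lipschitz continuity of $(h^\ast)'$. First I would record the one-step form of \eq{mirror} in the mirror variable $h'(x_t)$: the update reads $h'(x_t) = (1-\rho_t)h'(x_{t-1}) - \rho_t A^\top\bar y_{t-1}$, equivalently $h'(x_{t-1}) - h'(x_t) = \rho_t\,g'_{\rm primal}(x_{t-1})$, where $g'_{\rm primal}(x_{t-1}) = h'(x_{t-1}) + A^\top\bar y_{t-1}$ is a genuine subgradient of $g_{\rm primal}$ at $x_{t-1}$ (since $\bar y_{t-1}$ is a subgradient of $f$ at $Ax_{t-1}$).

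The crucial preliminary step, and what I expect to be the main obstacle, is a uniform bound on this subgradient, because at first sight $g'_{\rm primal}$ contains the unbounded term $h'(x_{t-1})$. The resolution comes from the averaging structure induced by the step-size: since $\rho_1=1$, the first update collapses to $h'(x_1) = -A^\top\bar y_0 \in -A^\top C$; and since $0\leqslant\rho_t\leqslant1$ and $C$ (hence $-A^\top C$) is convex, the recursion exhibits $h'(x_u)$ as a convex combination of the vectors $-A^\top\bar y_{v-1}\in -A^\top C$, so $h'(x_u)\in -A^\top C$ for every $u\geqslant1$. Writing $h'(x_u)=-A^\top\tilde y_u$ with $\tilde y_u\in C$ gives $g'_{\rm primal}(x_u) = A^\top(\bar y_u-\tilde y_u)$, whence $\|g'_{\rm primal}(x_u)\|\leqslant R$ for all $u\geqslant1$ (for $u=0$ one simply takes $x_0=(h^\ast)'(-A^\top y_0)$ with $y_0\in C$, which is harmless as $\rho_1=1$ makes $x_1$ independent of $x_0$). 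This is exactly where strong convexity plus this particular step-size buy the faster rate.

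With boundedness in hand I would derive the per-step inequality. Combining the definition of $D$ with convexity of $f\circ A$ gives the strong-convexity bound $g_{\rm primal}(x_{u-1}) - g_{\rm primal}(x_\ast) \leqslant (x_{u-1}-x_\ast)^\top g'_{\rm primal}(x_{u-1}) - D(x_\ast,x_{u-1})$. I would then rewrite the linear term with the three-point identity $D(x_\ast,x_{u-1}) = D(x_\ast,x_u) + D(x_u,x_{u-1}) + (x_\ast-x_u)^\top\big(h'(x_u)-h'(x_{u-1})\big)$ and $h'(x_{u-1})-h'(x_u)=\rho_u g'_{\rm primal}(x_{u-1})$, which yields
\[
g_{\rm primal}(x_{u-1}) - g_{\rm primal}(x_\ast) \leqslant \tfrac{1}{\rho_u}\big[D(x_\ast,x_{u-1}) - D(x_\ast,x_u)\big] - D(x_\ast,x_{u-1}) + \tfrac{1}{\rho_u}D(x_{u-1},x_u).
\]
The residual $\tfrac{1}{\rho_u}D(x_{u-1},x_u)$ is controlled by passing to the conjugate: using that $D(x_{u-1},x_u)$ equals the Bregman divergence of $h^\ast$ evaluated at $\big(h'(x_u),h'(x_{u-1})\big)$ and that $(h^\ast)'$ is $(1/\mu)$-Lipschitz, one gets $D(x_{u-1},x_u)\leqslant\tfrac{1}{2\mu}\|h'(x_u)-h'(x_{u-1})\|^2 = \tfrac{\rho_u^2}{2\mu}\|g'_{\rm primal}(x_{u-1})\|^2\leqslant\tfrac{\rho_u^2 R^2}{2\mu}$.

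Finally I would telescope. Substituting $\rho_u=2/(u+1)$, multiplying by $u$, and setting $a_u:=\tfrac{u(u+1)}{2}D(x_\ast,x_u)$ recasts the bound as $u\,[g_{\rm primal}(x_{u-1}) - g_{\rm primal}(x_\ast)]\leqslant a_{u-1}-a_u + \tfrac{u}{u+1}\tfrac{R^2}{\mu}$. Summing from $u=1$ to $t$ with $a_0=0$ and $a_t\geqslant0$ bounds $\sum_{u=1}^t u\,[g_{\rm primal}(x_{u-1})-g_{\rm primal}(x_\ast)]$ by a quantity of order $R^2 t/\mu$; dividing by $\sum_{u=1}^t u = t(t+1)/2$ produces the announced $O\!\big(R^2/(\mu(t+1))\big)$ decay, the precise constant following from a careful accounting of $\sum_u u/(u+1)$ and of the discarded nonnegative terms. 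All three displayed conclusions then flow from this single master inequality: the averaged-iterate bound by Jensen applied to the weights $w_u=2u/(t(t+1))$ and convexity of $g_{\rm primal}$; the best-iterate bound since a minimum is at most a weighted average; and the divergence bound $D(x_\ast,x_t)$ since the left-hand sum is nonnegative, leaving $a_t$ bounded by the same quantity.
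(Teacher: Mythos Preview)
Your proof is correct and follows essentially the same route as the paper's. Both hinge on the same two ingredients: (i) the observation that the recursion keeps $h'(x_u)\in -A^\top C$, which gives the uniform bound $\|g'_{\rm primal}(x_{u-1})\|\leqslant R$; and (ii) the $(1/\mu)$-smoothness of $h^\ast$, which controls the step error by $\rho_u^2 R^2/(2\mu)$. The paper packages step (ii) via the function $\varphi(z)=\max_x\{-D(x,x_{t-1})+(x_{t-1}-x)^\top z\}$ and its smoothness, whereas you invoke the Bregman duality identity $D_h(x_{u-1},x_u)=D_{h^\ast}(h'(x_u),h'(x_{u-1}))$; since one checks $\varphi(\rho_t g'_{\rm primal}(x_{t-1}))=D(x_{t-1},x_t)$, the two bounds are literally the same, and both proofs arrive at the identical master inequality $g_{\rm primal}(x_{u-1})-g_{\rm primal}(x_\ast)\leqslant(\rho_u^{-1}-1)D(x_\ast,x_{u-1})-\rho_u^{-1}D(x_\ast,x_u)+\rho_u R^2/(2\mu)$ before telescoping. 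Your explicit remark about the initialization $x_0=(h^\ast)'(-A^\top y_0)$ needed for the $u=1$ step is a point the paper leaves implicit.
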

 
 \begin{proof}
We follow the proof of~\cite{beck2003mirror} and adapt it to the strongly convex case. We have, by reordering terms and using the optimality condition $h'(x_t) =  h'(x_{t-1}) -    \rho_t \big[
A^\top f'(A x_{t-1}) + h'(x_{t-1})  \big]  $:
\BEA
\nonumber \ \ \ \ & & D( x_\ast, x_t) - D( x_\ast, x_{t-1}) \\
 \nonumber & = & h(x_{t-1}) - h(x_t) - ( x_\ast - x_{t})^\top h'(x_{t})
+ ( x_\ast - x_{t-1})^\top h'(x_{t-1})
\\
\nonumber& = &  h(x_{t-1}) - h(x_t) - ( x_\ast - x_{t})^\top
\big[ ( 1 - \rho_t)  h'(x_{t-1})  -    \rho_t  
A^\top f'(A x_{t-1}) \big] \\
\nonumber & & \hspace*{7.25cm}
+ ( x_\ast - x_{t-1})^\top h'(x_{t-1})\\
\nonumber& = & 
 h(x_{t-1}) - h(x_t) - ( x_{t-1}- x_{t})^\top h'(x_{t-1})
 + \rho_t (x_\ast - x_{t} ) ^\top g'_{\rm primal}(x_{t-1})
\\
\label{eq:A} & = & \big[ -D(x_t,x_{t-1})  
 +\rho_t (x_{t-1} - x_{t} ) ^\top g'_{\rm primal}(x_{t-1}) \big]
 \\
\nonumber & & \hspace*{6cm}
 + \big[ \rho_t (x_{\ast} - x_{t-1} ) ^\top g'_{\rm primal}(x_{t-1}) \big]
.
\EEA
In order to upper-bound the two terms in \eq{A}, we first  consider the following bound (obtained by convexity of $f$ and the definition of $D$):
$$f(Ax_\ast) + h(x_\ast) \geqslant  f(Ax_{t-1}) + h(x_{t-1}) + 
( x_\ast - x_{t-1})^\top  [  A^\top \bar{y}_{t-1} +  h'(x_{t-1}) ] + 
 D ( x_\ast, x_{t-1}),
$$
which may be rewritten as:
$$g_{\rm primal}(x_{t-1}) - g_{\rm primal} (x_\ast) \leqslant  -  D ( x_\ast, x_{t-1}) + 
( x_{t-1} - x_{\ast})^\top g'_{\rm primal}(x_{t-1}),
$$
which implies 
\BEQ
\label{eq:B}
  \ \ \rho_t (x_{\ast} - x_{t-1} ) ^\top g'_{\rm primal}(x_{t-1}) \leqslant 
 -  \rho_t D ( x_\ast, x_{t-1}) - \rho_t  \big[
 g_{\rm primal}(x_{t-1}) - g_{\rm primal} (x_\ast) 
 \big].
\EEQ
Moreover, by definition of $x_t$,
 $$ - D(x_t,x_{t-1})  
 + \rho_t (x_{t-1} - x_{t} ) ^\top g'_{\rm primal}(x_{t-1}) = \max_{x \in \rb^p}
 - D(x,x_{t-1})  
 +\rho_t (x_{t-1} - x ) ^\top z = \varphi(z),
$$
with $z  = \rho_t g'_{\rm primal}(x_{t-1}) $. The function $x \mapsto D(x,x_{t-1})  $ is $\mu$-strongly convex, and its Fenchel conjugate is thus $(1/\mu)$-smooth. This implies that $\varphi$ is $(1/\mu)$-smooth. Since $\varphi(0) = 0$ and $\varphi'(0)=0$, $\varphi(z) \leqslant \frac{1}{2\mu} \| z\|^2$. Moreover, 
$z = \rho_t \big[ A^\top f'(Ax_{t-1}) + h(x_{t-1})\big]$. Since $h'(x_{t-1}) \in -A^\top C$ (because 
$h'(x_{t-1})$ is a convex combination of such elements), then
$\| A^\top f'(Ax_{t-1}) + h(x_{t-1})\|^2 \leqslant R^2 = \max_{ y_1,y_2 \in C} \| A^\top ( y_1 - y_2)\|^2 
= {\rm diam}( A^\top C)^2$.

Overall, combining \eq{B} and $\varphi(z) \leqslant \frac{R^2 \rho_t^2}{2\mu}   $ into \eq{A}, this implies that 
$$
  D( x_\ast, x_t) - D( x_\ast, x_{t-1})  
  \leqslant   
\frac{\rho_t^2}{2\mu}R^2 
 -  \rho_t D ( x_\ast, x_{t-1}) - \rho_t  \big[
 g_{\rm primal}(x_{t-1}) - g_{\rm primal} (x_\ast)  \big],
$$
that is, 
$$
 g_{\rm primal}(x_{t-1}) - g_{\rm primal} (x_\ast)
 \leqslant 
 \frac{ \rho_t R^2}{2 \mu} + ( \rho_t^{-1} - 1) D ( x_\ast, x_{t-1})  - \rho_t^{-1} D ( x_\ast, x_{t}) .
$$
With $\displaystyle \rho_t = \frac{2}{t+1}$, we obtain
\BEAS
 t \big[ g_{\rm primal}(x_{t-1}) - g_{\rm primal} (x_\ast) \big]
 & \leqslant & 
 \frac{R^2 t   }{  \mu (t+1)  } +   \frac{(t-1)t }{2}  D ( x_\ast, x_{t-1})  - \frac{t(t+1)}{2} D ( x_\ast, x_{t}).
\EEAS
Thus, by summing from $u=1$ to $u=t$,  we obtain
 $$
 \sum_{u=1}^t u \big[ g_{\rm primal}(x_{u-1}) - g_{\rm primal} (x_\ast) \big]
   \leqslant 
 \frac{R^2   }{  \mu  } t   - \frac{t(t+1)}{2} D ( x_\ast, x_{t}),
$$
that is,
$$
 D ( x_\ast, x_{t}) +  \frac{2}{t(t+1)}\sum_{u=1}^t u \big[ g_{\rm primal}(x_{u-1}) - g_{\rm primal} (x_\ast) \big]
   \leqslant 
 \frac{R^2   }{  \mu (t+1)  }   .  
$$
This implies that  $ \displaystyle D ( x_\ast, x_{t}) \leqslant  \frac{R^2   }{  \mu (t+1)  }    $, i.e., the iterates converges. Moreover,
using the convexity of $g$,  
$$
g \bigg( \frac{2}{t(t+1)}\sum_{u=1}^t u  x_{u-1}  \bigg)
- g_{\rm primal} (x_\ast)  
   \leqslant 
    \frac{2}{t(t+1)}\sum_{u=1}^t u \big[ g_{\rm primal}(x_{u-1}) - g_{\rm primal} (x_\ast) \big]
   \leqslant 
 \frac{R^2   }{  \mu (t+1)  },
$$
i.e., 
the objective functions at an averaged iterate converges, and  
$$
\min_{ u \in \{0,\dots,t-1\}} g_{\rm primal}(x_u) -  g_{\rm primal} (x_\ast)   \leqslant  \frac{R^2   }{  \mu (t+1)  },
$$
i.e., one of the iterates has an objective that converges.
\end{proof}

\paragraph{Averaging}

Note that with the step size $\displaystyle \rho_t = \frac{2}{t+1}$, we have
$$h'(x_t) = \frac{ t-1}{t+1} h'(x_{t-1}) - \frac{2}{t+1} A^\top f'(A x_{t-1}),$$ which implies
$$t (t+1) h'(x_t) =  (t-1) t h'(x_{t-1}) - 2 t A^\top f'(A x_{t-1}).$$
By summing these equalities, we obtain
$t (t+1) h'(x_t) = - 2 \sum_{u=1}^t   u A^\top f'(A x_{u-1})$, i.e.,
$$h'(x_t) = \frac{2}{t (t+1) } \sum_{u=1}^t   u  \big[ -   A^\top f'(A x_{u-1}) \big],$$
that is, $h'(x_t)$ is a weighted average of subgradients (with more weights on later iterates).

For $\rho_t = 1 /t$, then, we the same techniques, we would obtain a convergence rate proportional to $\frac{R^2}{\mu t} \log t$ for the average iterate $\frac{1}{t} \sum_{u=1}^t x_{u-1}$, thus with an additional $\log t$ factor (see a similar situation in the stochastic case in~\cite{lacoste2012stochastic}). We would then have $h'(x_t) = 
 \frac{1}{t   } \sum_{u=1}^t     \big[ -   A^\top f'(A x_{u-1}) \big]$, and this is exactly a form dual averaging method~\cite{nesterov2009primal}, which also comes with primal-dual guarantees.

\paragraph{Generalization to $h$ non-smooth}
The previous result does not require $h$ to be essentially smooth, i.e., it may be applied to $h(x) = \frac{\mu}{2} \| x\|^2 + I_K(x)$ where $K$ is a closed convex set strictly included in $\rb^p$. In the mirror descent recursion,
$$
\left\{
\begin{array}{ccl} 
  \bar{y}_{t-1} &  \in &\displaystyle \arg \max_{ y \in C } \ y^\top A x_{t-1} - f^\ast(y) , \\[.25cm]
    x_t & = & \displaystyle \arg \min_{x \in \rb^p } \ h(x) -  ( 1 -   \rho_t ) x^\top h'(x_{t-1}) + \rho_t  x^\top A^\top \bar{y}_{t-1} ,
    \end{array} \right. 
$$
there may then be multiple choices for $h'(x_{t-1})$. If we choose for $h'(x_{t-1})$ at iteration~$t$, the subgradient of $h$ obtained at the previous iteration, i.e., such
that $h'(x_{t-1}) =    ( 1 -   \rho_{t-1} )   h'(x_{t-2}) - \rho_{t-1}    A^\top \bar{y}_{t-2}$, then the proof of Prop.~\ref{prop:mirror} above holds.

Note that when $h(x) = \frac{\mu}{2} \| x\|^2 + I_K(x)$, the algorithm above is \emph{not} equivalent to classical projected gradient descent. Indeed, the classical algorithm has the iteration
$$
x_{t} = \Pi_K \bigg( \! x_{t-1} - \frac{1}{\mu} \rho_t \big[ \mu x_{t-1} + A^\top f'(A x_{t-1}) \big]\!\bigg)= \Pi_K \bigg(\! ( 1- \rho_t) x_{t-1} + \rho_t \big[ - \frac{1}{\mu} A^\top f'(A x_{t-1} ) \big]\! \bigg),
$$
and corresponds to the choice $h'(x_{t-1}) = \mu x_{t-1}$ in the mirror descent recursion, which, when $x_{t-1}$ is on the boundary of $K$, is not the choice that we need for the equivalence in \mysec{cg}.

However, when $h$ is assumed to be differentiable on its closed domain $K$, then the bound of Prop.~\ref{prop:mirror} still holds because the optimality condition $h'(x_t) =  h'(x_{t-1}) -    \rho_t \big[
A^\top f'(A x_{t-1}) + h'(x_{t-1})  \big] $ may now be replaced by 
$( x - x_t)^\top \Big( h'(x_t) -  h'(x_{t-1}) +    \rho_t \big[
A^\top f'(A x_{t-1}) + h'(x_{t-1})  \big] \Big) \geqslant 0$ for all $x \in K$, which also allows to get to \eq{A} in the proof of Prop.~\ref{prop:mirror}.

\section{Conditional gradient method and extensions}
In this section, we first review the classical conditional gradient algorithm, which corresponds to the extra assumption that $f^\ast$ is linear in its domain.

\label{sec:cg}
\paragraph{Conditional gradient method}
 Given a maximization problem of the following form (i.e., where $f^\ast$ is linear on its domain, or equal to zero by a simple change of variable):
 $$
 \max_{y \in C} - h^\ast( - A^\top y ),
 $$
   the conditional gradient algorithm consists in the following iteration (note that below $A x_{t-1} = A (h^\ast)'(-A^\top y_{t-1})$ is the gradient of the objective function and that we are maximizing the first-order Taylor expansion to obtain a candidate $\bar{y}_{t-1}$ towards which we make a small step):
\BEAS
x_{t-1} & = &  \arg\min_{x \in \rb^p}   h(x) +  x^\top A^\top y_{t-1} 
\\
\bar{y}_{t-1} & \in &  \arg \max_{ y \in C} y^\top   A x_{t-1}     \\
y_t & = & (1-\rho_t) y_{t-1} + \rho_t \bar{y}_{t-1} . 
\EEAS

It corresponds to a linearization of $- h^\ast( - A^\top y )$ and its maximization over the bounded convex set $C$. As we show later, the choice of $\rho_t$ may be done in different ways, through a fixed step size of by (approximate) line search.

\paragraph{Generalization}
Following~\cite{bredies2008iterated}, the conditional gradient method can be generalized to problems of the form 
 $$
 \max_{y \in C} - h^\ast( - A^\top y ) - f^\ast(y),
 $$
 with the following iteration:
 \BEQ
 \label{eq:condgrad}
 \left\{
\begin{array}{ccl} 
x_{t-1} & = &  \arg\min_{x \in \rb^p}   h(x) +  x^\top A^\top y_{t-1} = (h^\ast)'(-A^\top y_{t-1})
\\[.25cm]
\bar{y}_{t-1} & \in &  \arg \max_{ y \in C} y^\top  A x_{t-1}  - f^\ast(y) \\[.25cm]
y_t & = & (1-\rho_t) y_{t-1} + \rho_t \bar{y}_{t-1}  .
\end{array} \right.
\EEQ

The previous algorithm may be interpreted as follows: (a) perform a first-order Taylor expansion of the smooth part $- h^\ast( - A^\top y ) $, while leaving the other part $ - f^\ast(y)$ intact, (b) minimize the approximation, and (c) perform a small step towards the maximizer. Note the similarity (and dissimilarity) with proximal methods which would add a proximal term proportional to $\| y - y_{t-1}\|^2$, leading to faster convergences, but with the extra requirement of solving the proximal step~\cite{Nesterov2007,Beck2009}.

Note that here $y_t$ may be expressed as a \emph{convex} combination of all $\bar{y}_{u-1}$, $u \in \{1,\dots,t\}$:
\BEAS
y_t & = & \sum_{u=1}^t  \bigg( \rho_u \prod_{s=u+1}^t ( 1- \rho_s) \bigg) \bar{y}_{u-1},
\EEAS
and that when we chose $\rho_t = 2/(t+1)$, it simplifies to:
\BEAS
y_t & = & \frac{2}{t(t+1)} \sum_{u=1}^t u \bar{y}_{u-1}.
\EEAS
When $h$ is essentially smooth (and thus $h^\ast$ is essentially strictly convex), it can be reformulated with $h'(x_{t}) = -  {A^\top y_{t}} $ as follows:
\BEAS
h'(x_t) & = &  (1-\rho_t) h'(x_{t-1})  -  {\rho_t}{ } A^\top   \arg \max_{ y \in C} \big\{ y^\top   A  x_{t-1}  - f^\ast(y) \big\}, \\
& = &  (1-\rho_t) h'(x_{t-1})  -  {\rho_t}{ } A^\top   f'(A x_{t-1}), 
\EEAS
which is exactly the mirror descent algorithm described in \eq{mirror}. This leads to the following proposition:

\begin{proposition}[Equivalence between mirror descent and generalized conditional gradient]
 \label{prop:equivalence}
  Assume that (a) $f$ is Lipschitz-continuous and finite on $\rb^p$, with $C$ the domain of $f^\ast$, (b) $h$ is $\mu$-strongly convex and essentially smooth. The mirror descent recursion in \eq{mirror}, started from $x_0 =(h^\ast)'(-A^\top y_{0})
$, is equivalent to the generalized conditional gradient recursion in \eq{condgrad}, started from $y_0 \in C$.
 \end{proposition}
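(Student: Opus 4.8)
The plan is to establish, by induction on $t$, the single correspondence
$$h'(x_t) = -A^\top y_t, \qquad \text{equivalently} \qquad x_t = (h^\ast)'(-A^\top y_t),$$
between the mirror descent iterate $x_t$ of \eq{mirror} and the conditional gradient iterate $y_t$ of \eq{condgrad}; once this holds for every $t$, the two recursions compute the same subgradients $\bar{y}_{t-1}$ and are therefore the same procedure. The two forms of the correspondence are interchangeable precisely because $h$ is essentially smooth: as recalled at the start of \mysec{mirror}, $h'$ is then a bijection from ${\rm int}(K)$ onto $\rb^p$ with inverse $(h^\ast)'$. This bijectivity is what allows passing freely between the primal variable $x$ and the dual quantity $-A^\top y$, and it is the structural fact on which the whole argument rests.

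For the base case, the matching initializations $x_0 = (h^\ast)'(-A^\top y_0)$ give exactly $h'(x_0) = -A^\top y_0$. For the inductive step, suppose $h'(x_{t-1}) = -A^\top y_{t-1}$. Then the subgradient step is identical in the two recursions: both set $\bar{y}_{t-1} \in \arg\max_{y \in C} y^\top A x_{t-1} - f^\ast(y)$ evaluated at the same point $x_{t-1}$, so they return the same $\bar{y}_{t-1}$ (for a common choice among maximizers when the $\arg\max$ is not a singleton). Note that in \eq{condgrad} the first line already encodes the hypothesis, since $x_{t-1} = \arg\min_x h(x) + x^\top A^\top y_{t-1}$ is characterized by the optimality condition $h'(x_{t-1}) = -A^\top y_{t-1}$.

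It then remains to check that the two update rules agree. On the conditional gradient side, applying $-A^\top$ to $y_t = (1-\rho_t) y_{t-1} + \rho_t \bar{y}_{t-1}$ and using the hypothesis yields
$$-A^\top y_t = (1-\rho_t)\, h'(x_{t-1}) - \rho_t A^\top \bar{y}_{t-1}.$$
On the mirror descent side, the minimizer $x_t$ of $h(x) - (1-\rho_t) x^\top h'(x_{t-1}) + \rho_t x^\top A^\top \bar{y}_{t-1}$ is characterized by the first-order condition
$$h'(x_t) = (1-\rho_t)\, h'(x_{t-1}) - \rho_t A^\top \bar{y}_{t-1}.$$
The two right-hand sides coincide, so $h'(x_t) = -A^\top y_t$, which is the hypothesis at step $t$; bijectivity of $h'$ gives $x_t = (h^\ast)'(-A^\top y_t)$ and closes the induction.

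The computation is routine; the only points needing care are (i) invoking essential smoothness so that $h'$ is invertible and the correspondence can be read in either direction, and (ii) the possible non-uniqueness of $\bar{y}_{t-1}$, handled by stipulating that both recursions break ties identically. I expect (i) to be the genuine conceptual content: without essential smoothness one cannot identify $x_t$ with $(h^\ast)'(-A^\top y_t)$ uniquely, and the clean one-to-one equivalence degrades into a correspondence only up to the nonuniqueness of subgradients of $h$ — exactly the situation flagged in the ``Generalization to $h$ non-smooth'' paragraph, where a specific selection of $h'(x_{t-1})$ must be enforced.
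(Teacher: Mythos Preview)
Your proposal is correct and follows exactly the paper's approach: the paper establishes the equivalence via the correspondence $h'(x_t) = -A^\top y_t$ (equivalently $x_t = (h^\ast)'(-A^\top y_t)$), rewriting the conditional gradient update under this substitution and observing it coincides with the mirror descent update. You have simply made the induction explicit and added the careful remarks about bijectivity of $h'$ and tie-breaking for $\bar{y}_{t-1}$, which the paper leaves implicit.
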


When $h$ is not essentially smooth, then with a particular choice of subgradient (see end of \mysec{mirror}), the two algorithms are also equivalent. We now provide convergence proofs for the two versions (with adaptive and non-adaptive step sizes); similar rates may be obtained without the boundedness assumptions~\cite{bredies2008iterated}, but our results provide explicit constants and primal-dual guarantees.
We first have the following convergence proof for generalized conditional gradient with no line search (the proof of dual convergence uses standard arguments from~\cite{dem1967minimization,dunn1978conditional}, while the convergence of gaps is due to~\cite{jaggi} for the regular conditional gradient):

 \begin{proposition}[Convergence of extended conditional gradient - no line search]
 \label{prop:noline}
  Assume that (a) $f$ is Lipschitz-continuous and finite on $\rb^p$, with $C$ the domain of $f^\ast$, (b) $h$ is $\mu$-strongly convex. Consider $\rho_t = 2/(t+1)$ and $R^2 = \max_{y,y' \in C} \| A^\top ( y - y')\|^2$. Denoting by $y_\ast$   any maximizer of $g_{\rm dual}$ on $C$,  after $t$ iterations of the generalized conditional gradient recursion of \eq{condgrad}, we have:
 \BEAS
  g_{\rm dual} (y_\ast)  - g_{\rm dual}(y_t)  
  &  \leqslant  & \frac{2 R^2   }{  \mu (t+1)  } , \\
 \min_{ u \in \{0,\dots,t-1\}}   {\rm gap}(x_t,y_t) & \leqslant  &\frac{8 R^2   }{  \mu (t+1)  }.
 \EEAS
 \end{proposition}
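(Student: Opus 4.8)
The plan is to run the standard conditional-gradient analysis on the concave dual objective, organized around a single per-iteration inequality. Write $g_{\rm dual}(y) = \psi(y) - f^\ast(y)$ with $\psi(y) = - h^\ast(-A^\top y)$. Since $h$ is $\mu$-strongly convex, $h^\ast$ is $(1/\mu)$-smooth, so $\psi$ is concave with $(1/\mu)$-Lipschitz gradient and $\psi'(y_{t-1}) = A x_{t-1}$. Applying the descent inequality for $\psi$ along the step $y_t = y_{t-1} + \rho_t(\bar{y}_{t-1} - y_{t-1})$, using $\|A^\top(\bar{y}_{t-1} - y_{t-1})\|^2 \leqslant R^2$ because $\bar{y}_{t-1},y_{t-1}\in C$, and handling the remaining term by convexity of $f^\ast$ (so that $-f^\ast(y_t) \geqslant (1-\rho_t)(-f^\ast(y_{t-1})) + \rho_t(-f^\ast(\bar{y}_{t-1}))$), I would obtain
$$ g_{\rm dual}(y_t) - g_{\rm dual}(y_{t-1}) \;\geqslant\; \rho_t\big[\, x_{t-1}^\top A^\top(\bar{y}_{t-1} - y_{t-1}) - f^\ast(\bar{y}_{t-1}) + f^\ast(y_{t-1})\,\big] - \frac{\rho_t^2 R^2}{2\mu}. $$

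The next step is to recognize the bracketed quantity as the duality gap at $(x_{t-1}, y_{t-1})$. By the definition of $\bar{y}_{t-1}$ in \eq{condgrad}, $\bar{y}_{t-1}^\top A x_{t-1} - f^\ast(\bar{y}_{t-1}) = f(A x_{t-1})$; and because $x_{t-1} = (h^\ast)'(-A^\top y_{t-1})$ the pair $(x_{t-1}, -A^\top y_{t-1})$ is Fenchel-dual for $h$, so $h(x_{t-1}) + h^\ast(-A^\top y_{t-1}) + y_{t-1}^\top A x_{t-1} = 0$, i.e.\ the first bracket in the gap formula vanishes. Substituting both facts into the expression for ${\rm gap}$ shows the bracket equals ${\rm gap}(x_{t-1}, y_{t-1})$, giving the clean recursion $g_{\rm dual}(y_t) - g_{\rm dual}(y_{t-1}) \geqslant \rho_t\,{\rm gap}(x_{t-1}, y_{t-1}) - \rho_t^2 R^2/(2\mu)$.

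For the dual-suboptimality bound I set $h_t = g_{\rm dual}(y_\ast) - g_{\rm dual}(y_t) \geqslant 0$. Since the gap dominates the dual suboptimality, ${\rm gap}(x_{t-1}, y_{t-1}) \geqslant h_{t-1}$, the recursion becomes $h_t \leqslant (1-\rho_t) h_{t-1} + \rho_t^2 R^2/(2\mu)$. With $\rho_t = 2/(t+1)$ this is the textbook conditional-gradient recursion: the first step ($\rho_1 = 1$) gives $h_1 \leqslant R^2/(2\mu) \leqslant R^2/\mu$ as a base case, and a one-line induction (reducing to the inequality $\frac{t-1}{t(t+1)} + \frac{1}{(t+1)^2} \leqslant \frac{1}{t+1}$) propagates $h_t \leqslant 2R^2/(\mu(t+1))$.

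The gap bound is the delicate part, and I expect it to be the main obstacle. A direct weighted telescoping of the per-step inequality fails: $\sum_u \rho_u$ diverges only logarithmically while $\sum_u \rho_u^2$ converges, which costs a spurious $\log t$ factor. I would instead use the windowing trick: sum the per-step inequality over $u \in \{m+1, \dots, t\}$ with $m = \lfloor t/2\rfloor$, drop the nonpositive $-h_t$, and bound the right-hand side using the already-proved $h_m \leqslant 2R^2/(\mu(m+1))$ together with $\sum_{u=m+1}^t \rho_u^2 \leqslant 4\sum_{u=m+1}^t \frac{1}{u(u+1)} \leqslant 4/(m+1)$, so the numerator is at most $4R^2/(\mu(m+1))$. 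On the left, lower-bound $\sum_{u=m+1}^t \rho_u \geqslant 2\ln\frac{t+2}{m+2}$, which stays above a positive constant $\approx 2\ln 2$ for $m = \lfloor t/2\rfloor$; since every gap in the window is at least $\min_{0\leqslant u\leqslant t-1}{\rm gap}(x_u,y_u)$, dividing yields the claimed $O(1/(t+1))$ rate with constant at most $8$. The only real care needed is the bookkeeping of the floor and of the logarithmic lower bound, not any conceptual difficulty.
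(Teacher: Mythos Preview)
Your argument is essentially the paper's proof. The paper derives the same per-iteration inequality (by smoothness of $-h^\ast(-A^\top\cdot)$ and convexity of $f^\ast$), identifies the bracket as ${\rm gap}(x_{t-1},y_{t-1})$ exactly as you do, and then feeds the resulting recursion $u_t \leqslant u_{t-1} - \rho_t v_{t-1} + \tfrac{A}{2}\rho_t^2$ (with $u_t$ the dual suboptimality, $v_t$ the gap, $A=R^2/\mu$) into its Lemma~1, which packages both the $O(1/t)$ bound on $u_t$ and the windowing argument for $\min v_k$.

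The only noticeable difference is in the window estimate: you lower-bound $\sum_{u=m+1}^{t}\rho_u$ by $2\ln\!\big(\tfrac{t+2}{m+2}\big)$, whereas the paper uses the cruder but constant-friendly bound $\sum_{u=k+1}^{j}\rho_u \geqslant 2(j-k)/(j+1)$ (every term is at least $2/(j+1)$). With $j=t+1$, $k=\lfloor t/2\rfloor-1$ this gives the constant $8$ directly for all $t$, while your logarithmic bound deteriorates for small $t$ (e.g.\ at $t=2$ it overshoots $8$). Swapping in the simpler harmonic lower bound fixes the constant with no extra work; otherwise the proofs coincide.
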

\begin{proof}
We   have (using convexity of $f^\ast$ and $\big( \frac{1}{\mu}\big)$-smoothness of $h^\ast$):
 \BEAS
 & & g_{\rm dual}(y_t) \\
 & = & - h^\ast( - A^\top y_t ) - f^\ast(y_t)
 \\
 & \geqslant &
 \bigg[  - h^\ast( - A^\top y_{t-1} ) + ( y_t - y_{t-1} ) ^\top  A x_{t-1}  - \frac{R^2 \rho_t^2}{2\mu} \bigg] - \bigg[  (1-\rho_t) f^\ast(y_{t-1}) + \rho_t f^\ast(\bar{y}_{t-1}) \bigg]
\\
& = & - h^\ast( - A^\top y_{t-1} ) + \rho_t ( \bar{y}_{t-1} - y_{t-1} ) ^\top  A x_{t-1}  - \frac{R^2 \rho_t^2}{2\mu}  - (1-\rho_t) f^\ast(y_{t-1}) - \rho_t f^\ast(\bar{y}_{t-1})
\\
& = &g_{\rm dual}(y_{t-1}) + \rho_t ( \bar{y}_{t-1} - y_{t-1} ) ^\top  A x_{t-1}  - \frac{R^2 \rho_t^2}{2\mu}  + \rho_t f^\ast(y_{t-1}) -   \rho_t f^\ast(\bar{y}_{t-1})
\\
& = &g_{\rm dual}(y_{t-1}) - \frac{R^2 \rho_t^2}{2\mu}
    + \rho_t 
\bigg[
f^\ast(y_{t-1}) -     f^\ast(\bar{y}_{t-1})
+( \bar{y}_{t-1} - y_{t-1} ) ^\top  A x_{t-1}
\bigg]
\\
& = &g_{\rm dual}(y_{t-1}) - \frac{R^2 \rho_t^2}{2\mu}
    + \rho_t 
\bigg[
f^\ast(y_{t-1}) -  y_{t-1}  ^\top  A x_{t-1}
- ( f^\ast(\bar{y}_{t-1}) -  \bar{y}_{t-1}  ^\top  A x_{t-1}
 )\bigg].
 \EEAS
 Note that by definition of $\bar{y}_{t-1}$, we have (by equality in Fenchel-Young inequality)
 $$ - f^\ast(\bar{y}_{t-1}) + \bar{y}_{t-1}  ^\top  A x_{t-1}
 = f(Ax_{t-1}),
 $$
 and   $h^\ast(-A^\top y_{t-1} ) + h(x_{t-1}) + x_{t-1}^\top A^\top y_{t-1}=0 $, and thus
 $$
 f^\ast(y_{t-1}) -  y_{t-1}  ^\top  A x_{t-1}
- ( f^\ast(\bar{y}_{t-1}) -  \bar{y}_{t-1}  ^\top  A x_{t-1}
 )
  =  g_{\rm primal}(x_{t-1}) - g_{\rm dual}(y_{t-1}) =  { \rm gap}(x_{t-1},y_{t-1}).
 $$
 
 We thus obtain, for any $\rho_t \in [0,1]$:
 $$
 g_{\rm dual}(y_t) - g_{\rm dual}(y_\ast) \geqslant g_{\rm dual}(y_{t-1}) - g_{\rm dual}(y_\ast) + \rho_t { \rm gap}(x_{t-1},y_{t-1})
 - \frac{R^2 \rho_t^2}{2\mu},
 $$
 which is the classical equation from  the conditional gradient algorithm~\cite{dunn1978conditional,dunn1980convergence,jaggi},  which we can analyze through Lemma~\ref{lemma:1} (see end of this section), leading to the desired result.
  \end{proof}

The following proposition shows a result similar to the proposition above, but for the adaptive algorithm that considers optimizing the value $\rho_t$ at each iteration.

 \begin{proposition}[Convergence of extended conditional gradient - with line search]
 \label{prop:line}
 Assume that (a) $f$ is Lipschitz-continuous and finite on $\rb^p$, with $C$ the domain of $f^\ast$, (b) $h$ is $\mu$-strongly convex. Consider $\rho_t = 
 \min\{ \frac{\mu}{R^2} { \rm gap}(x_{t-1},y_{t-1}) , 1 \}$ and $R^2 = \max_{y,y' \in C} \| A^\top ( y - y')\|^2$. Denoting by $y_\ast$ any maximizer of $g_{\rm dual}$ on $C$,  after $t$ iterations of the generalized conditional gradient recursion of \eq{condgrad}, we have:
 \BEAS
  g_{\rm dual} (y_\ast)  - g_{\rm dual}(y_t)  
  &  \leqslant  & \frac{2 R^2   }{  \mu (t+3)  } , \\
 \min_{ u \in \{0,\dots,t-1\}}   {\rm gap}(x_t,y_t) & \leqslant  &\frac{2 R^2   }{  \mu (t+3)  }.
 \EEAS
 \end{proposition}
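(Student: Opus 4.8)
The plan is to recycle the single-step inequality already established in the proof of Proposition~\ref{prop:noline}: for every $\rho_t\in[0,1]$,
\[
g_{\rm dual}(y_\ast)-g_{\rm dual}(y_t)\ \le\ \big[g_{\rm dual}(y_\ast)-g_{\rm dual}(y_{t-1})\big]-\rho_t\,{\rm gap}(x_{t-1},y_{t-1})+\frac{R^2\rho_t^2}{2\mu}.
\]
Writing $h_t:=g_{\rm dual}(y_\ast)-g_{\rm dual}(y_t)\ge 0$ and $\gamma_{t-1}:={\rm gap}(x_{t-1},y_{t-1})$, and recalling that the gap dominates the dual suboptimality (both bracketed terms in the gap identity are nonnegative, so $\gamma_{t-1}\ge h_{t-1}$), this reads $h_t\le h_{t-1}-\rho_t\gamma_{t-1}+\tfrac{R^2}{2\mu}\rho_t^2$. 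The key observation is that the prescribed step $\rho_t=\min\{\tfrac{\mu}{R^2}\gamma_{t-1},1\}$ is exactly the maximizer over $[0,1]$ of the concave quadratic $\rho\mapsto\rho\gamma_{t-1}-\tfrac{R^2}{2\mu}\rho^2$, so it extracts the largest decrease this bound can guarantee (which is why the fixed-schedule constants of Proposition~\ref{prop:noline} improve). Substituting it yields a two-regime estimate: if $\gamma_{t-1}\le R^2/\mu$ then $h_t\le h_{t-1}-\tfrac{\mu}{2R^2}\gamma_{t-1}^2\le h_{t-1}-\tfrac{\mu}{2R^2}h_{t-1}^2$, whereas if $\gamma_{t-1}>R^2/\mu$ then $\rho_t=1$ and $h_t\le h_{t-1}-\gamma_{t-1}+\tfrac{R^2}{2\mu}$.

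Next I would prove $h_t\le \frac{2R^2}{\mu(t+3)}$ by induction on $t\ge1$, abbreviating $C_0:=2R^2/\mu$ so the target is $h_t\le C_0/(t+3)$. For the base case $t=1$, both regimes applied at $u=0$, together with $h_0\le\gamma_0$, force $h_1\le \tfrac{R^2}{2\mu}=C_0/4$, which is the claim. For the inductive step, assume $h_{t-1}\le C_0/(t+2)$. In the first regime, since $u\mapsto u-u^2/C_0$ is increasing on $[0,C_0/2]$ and $h_{t-1}\le C_0/(t+2)\le C_0/3<C_0/2$, one obtains $h_t\le \tfrac{C_0}{t+2}-\tfrac{C_0}{(t+2)^2}=C_0\tfrac{t+1}{(t+2)^2}$, and $C_0/(t+3)$ dominates this because $(t+1)(t+3)\le(t+2)^2$. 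In the second regime, $h_t< \tfrac{C_0}{t+2}-\tfrac{C_0}{4}$, which is at most $C_0/(t+3)$ because $\tfrac{1}{(t+2)(t+3)}\le\tfrac14$. This settles the first displayed bound on the dual suboptimality.

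For the duality-gap bound I would read the single-step inequality in the other direction: in the first regime it gives $h_{u}-h_{u+1}\ge \tfrac{\mu}{2R^2}\gamma_u^2$, and in the second regime the per-step decrease is even larger, so in all cases $h_u-h_{u+1}\ge \tfrac{\mu}{2R^2}\big(\min\{\gamma_u,R^2/\mu\}\big)^2$. Telescoping these decreases over a window of iterations $\{K,\dots,t-1\}$, using the already-proven bound $h_K\le C_0/(K+3)$ at the left endpoint and dropping the nonnegative $h_t$ at the right endpoint, bounds the minimal $\gamma_u$ over that window by $C_0/\sqrt{(t-K)(K+3)}$; choosing the split $K$ near the midpoint turns this into an $O(1/t)$ rate and supplies the $(t+3)$ denominator. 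The hard part will be precisely matching the stated constant: the naive midpoint split gives $\frac{4R^2}{\mu(t+3)}$, a factor of two larger than the claimed $\frac{2R^2}{\mu(t+3)}$, so closing this gap requires a sharper (e.g.\ weighted) telescoping of the decreases against the harmonic-type decay $h_u\le C_0/(u+3)$, while carefully checking that the ``large-gap'' regime $\gamma_u>R^2/\mu$ never weakens the per-step lower bound. It is this averaging — cheap because $\gamma_u\ge h_u\ge0$ — that makes the certificate ${\rm gap}$, and not merely the suboptimality, converge at the $O(1/t)$ rate.
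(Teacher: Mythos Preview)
Your approach is essentially the paper's: the single-step inequality from Proposition~\ref{prop:noline}, the optimal choice $\rho_t=\min\{v_{t-1}/A,1\}$ with $A=R^2/\mu$, the two-regime estimate, and a window--telescoping argument for the gap. The paper simply packages all of this into Lemma~\ref{lemma:1} (line-search case) and applies it with $u_t=g_{\rm dual}(y_\ast)-g_{\rm dual}(y_t)$, $v_{t-1}={\rm gap}(x_{t-1},y_{t-1})$.

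The only methodological difference is in the suboptimality bound: you induct on $h_{t-1}\le C_0/(t+2)\Rightarrow h_t\le C_0/(t+3)$ via the monotonicity of $u\mapsto u-u^2/C_0$ on $[0,C_0/2]$, whereas the paper uses the standard reciprocal trick. From $u_t\le u_{t-1}-u_{t-1}^2/(2A)$ one gets $u_t^{-1}\ge u_{t-1}^{-1}+1/(2A)$, and summing from the base value $u_1\le A/2$ gives $u_t\le 2A/(t+3)$ directly. Both arguments reach the same constant; the reciprocal trick avoids the case distinction in your inductive step.

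Your concern about the constant in the gap bound is well placed. The paper's argument in Lemma~\ref{lemma:1} is exactly the window telescoping you describe: assume $v_{u}\ge\beta$ on $\{k,\dots,j-1\}$, sum the per-step decreases, bound $u_k$ by $2A/(k+3)$, and take $j=t+1$, $k=\lfloor t/2\rfloor-1$ with the elementary inequality $(k+3)(j-k)>\tfrac14(t+3)^2$. The paper records the intermediate inequality as $\min\{\beta,\beta^2/A\}(j-k)\le A/(k+3)$; tracing the arithmetic literally one obtains $4A/(k+3)$ on the right, which would yield $\beta\le 4A/(t+3)$ rather than $2A/(t+3)$---precisely the factor of two you flagged. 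So you are not missing a hidden idea: the right-hand constant in that step is what governs the final constant, and the ``sharper weighted telescoping'' you propose would not be needed beyond what Lemma~\ref{lemma:1} already does. Treat the discrepancy as a bookkeeping issue in the constant, not a conceptual gap in your argument.
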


\begin{proof}
The proof is essentially the same as one from the previous proposition, with a different application of  Lemma~\ref{lemma:1} (see below).
\end{proof}

The following technical lemma is used in the previous proofs to obtain the various convergence rates.
 \begin{lemma}
 \label{lemma:1}
 Assume that we have three sequences $(u_t)_{t \geqslant 0}$,  $(v_t)_{t \geqslant 0}$, and  $(\rho_t)_{t \geqslant 0}$, and a positive constant $A$ such that
 \BEAS
 & & \forall t \geqslant 0, \ \rho_t \in [0,1] \\
 & & \forall t \geqslant 0, \  0 \leqslant u_t \leqslant v_t \\
& & \forall t \geqslant 1, \ u_t \leqslant u_{t-1} - \rho_t v_{t-1} + \frac{A}{2} \rho_t^2.
 \EEAS
 \BIT
  \item[--]
 If $\rho_t = 2/(t+1)$, then $ u_t \leqslant \frac{2A}{t+1}
 $ and for all $t \geqslant 1$, there exists at least one $k \in \{\lfloor t/2 \rfloor ,\dots,t\}$ such that $v_k \leqslant \frac{8A}{t+1} $.
 \item[--]
   If $\rho_t = \arg\min_{ \rho_t \in [0,1] } - \rho_t v_{t-1} + \frac{A}{2} \rho_t^2 = \min 
 \{ v_{t-1} / A
 ,1\}$, then $ u_t \leqslant \frac{2A}{t+3}
 $ and for all $t \geqslant 2 $, there exists at least one $k \in \{\lfloor t/2  \rfloor - 1 ,\dots,t\}$ such that $v_k \leqslant \frac{2A}{t+3} $.
 \EIT
 \end{lemma}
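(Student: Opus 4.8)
The plan is to treat this purely as a recursion lemma, independent of the algorithmic context, and to prove each bullet in two stages: first the decay bound on $u_t$, then the existence of a small $v_k$ in the prescribed window. Two structural facts drive everything. The coupling $0 \le u_t \le v_t$ lets me convert the gap-driven decrease into a decrease in $u$ alone, and the rearranged recursion $\rho_s v_{s-1} \le u_{s-1} - u_s + \tfrac{A}{2}\rho_s^2$ (valid for $s \ge 1$) telescopes when summed over any block of indices, with the intermediate $u$-values cancelling.

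For the fixed step $\rho_t = 2/(t+1)$, I would first bound $u_t$ by discarding the gap: since $v_{t-1} \ge u_{t-1}$, the recursion yields $u_t \le (1-\rho_t)u_{t-1} + \tfrac{A}{2}\rho_t^2 = \tfrac{t-1}{t+1}u_{t-1} + \tfrac{2A}{(t+1)^2}$, and a one-line induction (base case $t=1$, where $\rho_1 = 1$ and $u_0 \le v_0$ force $u_1 \le A/2$) gives $u_t \le 2A/(t+1)$; the inductive step reduces to the elementary $\tfrac{t-1}{t} + \tfrac{1}{t+1} \le 1$. For the gap, I would sum the rearranged recursion over the second half $s \in \{m+1,\dots,t\}$ with $m = \lfloor t/2\rfloor$. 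Telescoping the $u_{s-1}-u_s$ terms, dropping $u_t \ge 0$, and using $\sum_s \rho_s^2 \le 4/(m+1)$ (via $\tfrac{1}{(s+1)^2}\le\tfrac{1}{s(s+1)}$) together with $u_m \le 2A/(m+1)$ gives $\sum_{s=m+1}^t \rho_s v_{s-1} \le 4A/(m+1) \le 8A/(t+1)$. Because $\sum_{s=m+1}^t \rho_s \ge 1$, some index $k=s-1$ in $\{\lfloor t/2\rfloor,\dots,t-1\}$ must satisfy $v_k \le 8A/(t+1)$, since otherwise the weighted sum would exceed its bound.

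For the line-search step $\rho_t = \min\{v_{t-1}/A,1\}$, I would substitute the optimal value and split on $v_{t-1} \le A$ (per-step decrease $u_{t-1}-u_t \ge v_{t-1}^2/(2A)$) versus $v_{t-1} > A$ (full step, decrease $\ge v_{t-1}-A/2 > A/2$). A short computation shows $u_1 \le A/2$ and that $u_{t-1}\le A$ is preserved, so in every step the decrease is at least $u_{t-1}^2/(2A)$; inverting $u_t \le u_{t-1}\bigl(1 - u_{t-1}/(2A)\bigr)$ via $\tfrac{1}{1-x}\ge 1+x$ gives $u_t^{-1} \ge u_{t-1}^{-1} + (2A)^{-1}$, and telescoping from $u_1^{-1}\ge 2/A$ yields $u_t \le 2A/(t+3)$. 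For the gap I would again sum over a window (now starting at $m = \lfloor t/2\rfloor - 1$), first checking that the window lies entirely in the regime $v_{s-1}\le A$: a full step costs $u$ at least $A/2$, so such steps occur at most $O(u_m/A)$ times and hence not at all once $t$ is moderate. In this regime $\rho_s = v_{s-1}/A$, so $\rho_s^2 \le \rho_s v_{s-1}/A$ absorbs the quadratic term and turns $\sum \rho_s v_{s-1}\le u_m + \tfrac{A}{2}\sum\rho_s^2$ into $\sum_{s} v_{s-1}^2 \le 2A\,u_m$; averaging over the window then produces a $k$ with $v_k$ of the right order.

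The main obstacle is bookkeeping rather than conceptual. In the line-search gap bound the natural averaging only delivers $\min_k v_k = O(A/\sqrt{N(m+3)})$, which is of the correct order $O(A/t)$ but loses a constant relative to the claimed $2A/(t+3)$ because passing from $\sum v_{s-1}^2$ to $\min v_{s-1}$ costs a square root; recovering the sharp constant is where I expect the real work, and it likely requires a tighter argument (a carefully weighted sum matching the step sizes, or an inductive tracking of the running minimum rather than a single averaging step) together with the precise choice of the window endpoint $\lfloor t/2\rfloor - 1$. A secondary nuisance is the handful of small-$t$ cases—where $\sum_{s}\rho_s$ can dip below $1$, or where $u_m \le 2A/(m+1)$ is unavailable at $m=0$—which must be verified by hand rather than through the general telescoping estimate.
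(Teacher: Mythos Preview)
Your plan is essentially the paper's own argument, and the parts you have fully worked out (the $u_t$ bounds in both cases, and the window-telescoping for the fixed step) match the paper line for line. The one place where you diverge in presentation is the gap bound in the line-search case, and here your self-diagnosed ``main obstacle'' is not real.

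The paper does not introduce any tighter argument than your averaging. What it does differently is purely a packaging choice: instead of first arguing that no full steps occur in the window and then summing $v_{s-1}^2/(2A)\le u_{s-1}-u_s$, it keeps the uniform per-step decrease
\[
u_t \le u_{t-1} - \tfrac{1}{2}\min\{v_{t-1},\,v_{t-1}^2/A\},
\]
notes that $x\mapsto \min\{x,x^2/A\}$ is nondecreasing, and argues by contradiction: if $v_{s-1}\ge\beta$ throughout $\{k+1,\dots,j\}$, then $(j-k)\,\min\{\beta,\beta^2/A\}\le 2u_k$. In the relevant regime $\beta\le A$ this is exactly $\beta^2 \le 2Au_k/(j-k)$, which is the same inequality you obtain from $\min_s v_{s-1}^2 \le \tfrac{1}{N}\sum_s v_{s-1}^2 \le \tfrac{2Au_m}{N}$. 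With the window $m=\lfloor t/2\rfloor-1$ one checks $(t-m)(m+3)\ge (t+3)^2/4$ (even/odd cases), and the stated bound drops out with no further ideas. So your square-root step is not a loss---it is the entire argument, and the ``carefully weighted sum'' or ``inductive tracking'' you anticipate is unnecessary. The only thing the paper's packaging buys is that it handles the two step-size regimes simultaneously, so you never have to separately rule out full steps in the window.
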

 \begin{proof}
 In the first   case (non-adaptive sequence $\rho_t$), we have $\rho_0=1$ and $u_t \leqslant ( 1 - \rho_t) u_{t-1} + \frac{A}{2} \rho_t^2$, leading to
 $$
 u_t \leqslant \frac{A}{2} \sum_{u=1}^t \prod_{s=u+1}^t ( 1- \rho_s) \rho_u^2.
 $$

 For $\rho_t = \frac{2}{t+1}$, this leads to
 $$
 u_t\leqslant \frac{A}{2} \sum_{u=1}^t \prod_{s=u+1}^t \frac{s-1}{s+1} =  \leqslant \frac{A}{2} \sum_{u=1}^t  \frac{u (u+1) }{t(t+1)}  \frac{4}{(u+1)^2}
 \leqslant \frac{2A}{t+1}.
 $$
 Moreover, for any $k < j$, by summing $u_t \leqslant u_{t-1} - \rho_t v_{t-1} + \frac{A}{2} \rho_t^2$ for $t \in \{k+1,\dots,j\}$, we get
 $$
 u_j  \leqslant u_k  - \sum_{t = k+1}^{j} \rho_t v_{t-1} + \frac{A}{2} \sum_{t=k+1}^j \rho_t^2.
 $$
 Thus, if we assume that  $v_{t-1} \geqslant \beta$ for all $t \in \{k+1,\dots, j\}$, then
 \BEAS
\beta  \sum_{t = k+1}^{j} \rho_t  \leqslant  \sum_{t = k+1}^{j} \rho_t v_{t-1} 
& \leqslant &  \frac{2A}{k+1}
 + 2A  \sum_{t=k+1}^j \frac{1}{(t+1)^2}
 \\
& \leqslant &  \frac{2A}{k+1}
 + 2A  \sum_{t=k+1}^j \frac{1}{t(t+1)}
\\
& = &  \frac{2A}{k+1}
 + 2A  \sum_{t=k+1}^j \big[ \frac{1}{t} - \frac{1}{t+1} \big] \leqslant   \frac{4A}{k+1}.
\EEAS 

 Moreover, $ \sum_{t = k+1}^{j} \rho_t
 =  2 \sum_{t = k+1}^{j} \frac{1}{t+1} \geqslant 2 \frac{ j  - k}{j+1}
 $. Thus
 $$
\beta  \leqslant \frac{ 2 A}{k+1} \frac{ j+1}{j - k}.
 $$
 Using $j = t+1$ and $k = \lfloor  t/ 2 \rfloor - 1$, we obtain that 
 $\beta \leqslant \frac{8A}{t+1}$ (this can be done by considering the two cases $t$ even and $t$ odd) and thus
 $\max_{ u \in \{\lfloor t  / 2 \rfloor, \dots, t \} } v_{u}
 \leqslant \frac{8A}{t+1}
 $.

 We now consider the line search case:
 \BIT
 \item[--] If $v_{t-1} \leqslant A$, then $\rho_t  = \frac{v_{t-1}}{A}$, and we obtain
 $u_t \leqslant u_{t-1}  - \frac{v_{t-1}^2}{2A}$.
 
 \item[--] If $v_{t-1} \geqslant A$, then $\rho_t  = 1$, and we obtain
 $u_t \leqslant u_{t-1}  - v_{t-1} + \frac{A}{2} 
 \leqslant  u_{t-1}  - \frac{v_{t-1}}{2}
  $.
 \EIT  
 Putting all this together, we get
 $ u_t \leqslant u_{t-1} - \frac{1}{2} \min \{ v_{t-1}, v_{t-1}^2 / A\}$.
 This implies that $(u_t)$ is a decreasing sequence. Moreover, $u_1 \leqslant \frac{A}{2}$ (because selecting $\rho_1=1$ leads to this value), thus, $u_1 \leqslant \min \{ u_0, A/2 \} \leqslant A$. We then obtain for all $t>1$,
 $u_t \leqslant u_{t-1} - \frac{1}{2A} u_{t-1}^2$. From which we deduce,
 $u_{t-1}^{-1} \leqslant u_t^{-1} -  \frac{1}{2A}$. We can now sum these inequalities to get
 $u_1^{-1} \leqslant u_t^{-1} - \frac{t-1}{2A}$, that is,
 $$ u_t \leqslant \frac{1}{ u_1^{-1} + \frac{t-1}{2A}}
 \leqslant \frac{1}{ \max \{ u_0^{-1}, 2/ A \} + \frac{t-1}{2A}}
 \leqslant \frac{2A}{t+3} .$$
 
  Moreover, if we assume that all $v_{t-1} \geqslant \beta$ for $t \in \{k+1,\dots, j\}$, following the same reasoning as above,
  and using the inequality  $ u_t \leqslant u_{t-1} - \frac{1}{2} \min \{ v_{t-1}, v_{t-1}^2 / A\}$, we obtain
  $$
  \min \{ \beta , \beta^2 / A \} ( j -k ) \leqslant \frac{A}{k+3}.
  $$
   Using $j = t+1$ and $k = \lfloor  t/ 2 \rfloor - 1$, we have
   $(k+3) (j-k) > \frac{1}{4}(t+3)^2$ (which can be checked by considering the two cases $t$ even and $t$ odd). Thus, we must have $\beta \leqslant A$ (otherwise we obtain $\beta 
   \leqslant 4 A / ( t+3)^2$, which is a contradiction with $\beta \geqslant A$), and thus
   $\beta^2 \leqslant  4 A^2 / ( t+3)^2$, which leads to the desired result.
  \end{proof}

 \section{Discussion}
 
 The equivalence shown in Prop.~\ref{prop:equivalence} has several interesting consequences and leads to several additional related questions:
\BIT
 
\item[--] \textbf{Primal-dual guarantees}: Having a primal-dual interpretation directly leads to primal-dual certificates, with a gap that converges at the same rate  proportional to $\frac{R^2}{\mu t}$ (see~\cite{jaggi,lacoste2012stochastic} for similar results for the regular conditional gradient method). These certificates may first be taken to be the pair $(x_t,y_t)$, in which case, we have shown that after $t$ iterations, at least one of the previous iterates has the guarantee.  
Alternatively, for the fixed step-size $\rho_t = \frac{2}{t+1}$, we can use the same dual candidate $y_t = \frac{2}{t(t+1)}\sum_{u=1}^t u  \bar{y}_{u-1} $  (which can thus also be expressed as an average of subgradients) and averaged primal iterate $\frac{2}{t(t+1)}\sum_{u=1}^t u  x_{u-1} $. Thus, the two weighted averages of subgradients lead to primal-dual certificates.

\item[--] \textbf{Line-search for mirror descent}: Prop.~\ref{prop:line} provides a form of line search for mirror descent (i.e., an adaptive step size). Note the similarity with Polyak's rule which applies to the non-strongly convex case (see, e.g.,~\cite{bertsekas}).  

\item[--] \textbf{Absence of logarithmic terms}: Note that we have considered a step-size of $\frac{2}{t+1}$, which avoids a logarithmic term of the form $\log t$ in all bounds (which would be the case for $\rho_t =\frac{1}{t}$). This also applies to the stochastic case~\cite{stochastic}.

\item[--] \textbf{Properties of iterates}: While we have focused primarily on the convergence rates of the iterates and their objective values, recent work has shown that the iterates themselves could have interesting distributional properties~\cite{welling2009herding,bach2012equivalence}, which would be worth further investigation.

 \item[--] \textbf{Stochastic approximation and online learning}: There are potentially other exchanges between primal/dual formulations, in particular in the stochastic setting (see, e.g.,~\cite{lacoste2012stochastic}).
 
 \item[--] \textbf{Simplicial methods and cutting-planes}: The duality between subgradient and conditional gradient may be extended to algorithms with iterations that are more expensive. For example, simplicial methods  in the dual are equivalent to cutting-planes methods in the primal~(see, e.g., \cite{bertsekas2011unifying,lacoste2012stochastic} and~\cite[Chapter 7]{bach2011learning}).

 \EIT

\subsection*{Acknowledgements}
This work was partially supported by  the European Research Council (SIERRA Project). The author would like to
thank Simon Lacoste-Julien, Martin Jaggi, Mark Schmidt and Zaid Harchaoui for discussions related to convex optimization and conditional gradient algorithms.

 \bibliography{SG_CG}
\bibliographystyle{plain}

\newpage

\appendix

\section{Non-strongly convex case}
\label{app:nonstrongly} 
In this appendix, we consider the situation where the primal optimization problem is just convex. That is, we assume that we are given (a) a Lipschitz-continuous function $f$ defined on $\rb^n$ (with $C$ the domain of its Fenchel-conjugate), (b) a lower-semicontinuous and $1$-strongly convex function $h$ with \emph{compact} domain $K$, which is differentiable on ${\rm int}(K)$ and  such that for all $(x_1,x_2) \in  K \times {\rm int}(K)$, $D(x_1,x_2) \leqslant  \delta^2 $, and (c) a matrix $ A \in \rb^{n \times p}$. We consider the problem,
$$
\min_{x \in K} f(A x),
$$
and we let $x_\ast \in K$ denote any minimizer. We have the following Fenchel duality relationship:
\BEAS
\min_{x \in K} f(A x)
& = & \min_{x \in K} \max_{y \in C} y^\top A x - f^\ast(y) \\
& = &  \max_{y \in C} - \sigma(-A^\top y) - f^\ast(y),
\EEAS
where $\sigma: \rb^p \to \rb$ is the support function of $K$ defined as $\displaystyle \sigma(z) = \max_{x \in K} z^\top x$. We consider the mirror descent recursion~\cite{nemirovsky1983problem,beck2003mirror}, started from $x_0 \in K$ and for which, for $t \geqslant 1$,
$$
z_t \in \arg\min_{x \in K} \ \frac{1}{\rho_t} D(x,x_{t-1}) +  (x-x_{t-1})^\top A^\top y_{t-1},
$$
where $y_{t-1}$  is a subgradient of $f$ at $ Ax_{t-1}$.
Our goal is to show that the average iterate $\bar{x}_t = \frac{1}{t} \sum_{u=0}^{t-1} x_u$ and the average dual candidate
$\bar{y}_t = \frac{1}{t} \sum_{u=0}^{t-1} y_u$ are such that
$${\rm gap}(\bar{x}_t,\bar{y}_t) =  f(A \bar{x}_t) + \sigma(-A^\top \bar{y}_t) + f^\ast(\bar{y}_t)$$ tends to zero at an appropriate rate. Similar results hold for certain cases of subgradient descent~\cite{nedic2009} and we show that they hold more generally. 

Let $x \in K$. We have (using a similar reasoning than~\cite{beck2003mirror}) and using the optimality condition $(x-x_t)^\top \big[ h'(x_t) -h'(x_{t-1}) + \rho_t A^\top y_{t-1}\big]\geqslant 0$ for any $x \in K$:
\BEAS
\!\!\!\! D(x,x_{t}) - D(x,x_{t-1})
& \!\! =\!\! & - h(x_{t}) - h'(x_{t})^\top(x-x_{t}) + h(x_{t-1}) + h'(x_{t-1})^\top(x-x_{t-1}) \\
& \!\! \leqslant \!\! &      \rho_t ( x - x_{t-1})^\top A^\top y_{t-1} + h(x_{t-1}) - h(x_{t}) - h'(x_{t})^\top (x_{t-1} - x_{t})
\\
& \! \! \leqslant \!\! & \rho_t ( x - x_{t-1})^\top A^\top y_{t-1} + \big[ h'(x_{t-1})- h'(x_{t}) \big]^\top (x_{t-1} - x_{t}) - \frac{1}{2} \| x_t -x_{t-1}\|^2
\\
& & \hspace*{4cm} \mbox{ using the 1-strong convexity of } h, \\
& \! \! = \!\! & \rho_t ( x - x_{t-1})^\top A^\top y_{t-1} + \big[\rho_t A^\top y_{t-1} \big]^\top (x_{t-1} - x_{t}) - \frac{1}{2} \| x_t -x_{t-1}\|^2 \\
& \! \! = \!\! & \rho_t ( x - x_{t-1})^\top A^\top y_{t-1} +   \frac{1}{2}\| \rho_t A^\top y_{t-1} \|^2
- \frac{1}{2} \| x_t - x_{t-1} + \rho_t A^\top y_{t-1} \|^2\\
& \! \! \leqslant \!\! & \rho_t ( x - x_{t-1})^\top A^\top y_{t-1} + \frac{1}{2}\| \rho_t A^\top y_{t-1} \|^2. \EEAS
This leads to
$$  (  x_{t-1} - x)^\top A^\top y_{t-1} \leqslant \frac{1}{\rho_t} \big[ D(x,x_{t-1}) - D(x,x_{t})  \big] + \frac{\rho_t R^2}{2},$$
where $R^2 = \max_{y \in C} \|A^\top y\|^2$ (note the slightly different definition than in \mysec{cg}). By summing from $u=1$ to $t$, we obtain
\BEAS
 \sum_{u=1}^{t} (  x_{u-1} - x)^\top A^\top y_{u-1} &  \leqslant &    \sum_{u=1}^{t} \frac{1}{\rho_u} \big[ D(x,x_{u-1}) - D(x,x_{u})  \big] +  \sum_{u=1}^{t} \frac{\rho_u R^2}{2}.
\EEAS
Assuming that $(\rho_t)$ is a decreasing sequence, we get by integration by parts:
\BEAS
 \sum_{u=1}^{t} (  x_{u-1} - x)^\top A^\top y_{u-1} &  \leqslant &    \sum_{u=1}^{t-1}  D(x,x_{u}) \Big(  \frac{1}{\rho_{u+1}}
 - \frac{1}{\rho_{u}} \Big)  + \frac{D(x,x_{0}) }{\rho_1}
 - \frac{D(x,x_{t}) }{\rho_t} +  \sum_{u=1}^{t} \frac{\rho_u R^2}{2}\\
  &  \leqslant &    \sum_{u=1}^{t-1}    \delta^2  \Big(  \frac{1}{\rho_{u+1}}
 - \frac{1}{\rho_{u}} \Big)  + \frac{\delta^2 }{ \rho_1}
 +  \sum_{u=1}^{t} \frac{\rho_u R^2}{2}\\
   &  =  &      \frac{\delta^2 }{  \rho_t}
 +  \frac{ R^2}{2} \sum_{u=1}^{t}\rho_u  .
\EEAS
We may now compute an upper-bound on the gap as follows:
\BEAS
{\rm gap}(\bar{x}_t,\bar{y}_t)
& = &   f(A \bar{x}_t) + \sigma(-A^\top \bar{y}_t) + f^\ast(\bar{y}_t) \\
& \leqslant & \frac{1}{t} \sum_{u=0}^{t-1} f(Ax_u) + 
\frac{1}{t}\sum_{u=0}^{t-1}  f^\ast(y_u)  + \sigma(-A^\top \bar{y}_t) \\
& = &  \frac{1}{t} \sum_{u=0}^{t-1} y_u^\top Ax_u  + \sigma(-A^\top \bar{y}_t) \\
& = &  \frac{1}{t} \sum_{u=0}^{t-1} y_u^\top A( x_u-x)  + \sigma(-A^\top \bar{y}_t) +x^\top A^\top \bar{y}_t.
\EEAS
Using the bound above and minimizing with respect to $x \in K$, we obtain
$$
{\rm gap}(\bar{x}_t,\bar{y}_t) \leqslant   \frac{\delta^2 }{  t \rho_t}
 +  \frac{ R^2}{2t} \sum_{u=1}^{t}\rho_u.
$$
With $\displaystyle \rho_t = \frac{ {\delta}}{R \sqrt{t}}$, we obtain a gap less than
$$ \frac{ R  {\delta}}{  \sqrt{t}} + \frac{ R  {\delta}}{ t} \sum_{k=1}^t \big[ \sqrt{k} - \sqrt{k-1} \big]
\leqslant   \frac{  2  R  {\delta}}{  \sqrt{t}}.  $$

\end{document}